\documentclass[smallextended]{svjour3}

\journalname{The Visual Computer}

\smartqed  

\usepackage{natbib}
\usepackage{graphicx}
\usepackage{amsmath}
\usepackage{amssymb}
\usepackage{booktabs}
\usepackage{algorithm}
\usepackage{algorithmic}
\usepackage[utf8]{inputenc}
\usepackage[T1]{fontenc}
\usepackage[hidelinks]{hyperref}
\usepackage{xcolor}
\usepackage[switch]{lineno}
\usepackage{tikz}
\usetikzlibrary{arrows.meta,positioning,fit,shapes.misc}

\newcommand{\R}{\mathbb{R}}
\newcommand{\tr}{\operatorname{tr}}
\newcommand{\hpoly}{h_{\mathrm{poly}}}
\newcommand{\hexp}{h_{\exp}}
\newcommand{\hgeo}{h_{\mathrm{geo}}}

\begin{document}

\title{polyDAG: Polynomial Acyclicity Constraints for Efficient Continuous Causal Discovery in Visual Semantic Graphs}

\titlerunning{polyDAG for Visual Semantic Graphs}

\author{Wenhao Zhang \and
        Ramin Ramezani \and
        Tao Han \and
        Kai Hwang \and
        Minyi Guo}

\institute{W. Zhang, T. Han, M. Guo \at
           School of Automation and Intelligent Sensing (Zhang, Han) and
           School of Computer Science (Guo),
           Shanghai Jiao Tong University, Shanghai, China \\
           \email{wenhao.zhang@sjtu.edu.cn}
\and
           R. Ramezani \at
           Department of Computer Science,
           University of California, Los Angeles, CA, USA
\and
           K. Hwang \at
           The Chinese University of Hong Kong, Shenzhen, China}

\date{Accepted: 24 July 2026}

\def\makeheadbox{\relax}

\maketitle

\noindent \textbf{The Visual Computer (2026) 42:432} \\
\url{https://doi.org/10.1007/s00371-026-04584-x}
\vspace{10pt}

\begin{abstract}
Modern image-analysis pipelines often convert images into structured semantic variables, such as facial attributes, object concepts, and scene descriptors. Learning directed dependencies among these variables can produce interpretable visual semantic graphs, but continuous directed acyclic graph learning is limited by the cost of enforcing acyclicity. We present polyDAG, a polynomial acyclicity framework for efficient continuous causal discovery in visual semantic graphs. polyDAG replaces the matrix-exponential acyclicity constraint with a finite polynomial trace constraint and proves that the new constraint is zero exactly for acyclic graphs. We further derive a geometric-series implementation that avoids the explicit summation loop while preserving the same acyclicity condition. Experiments on synthetic Erd\H{o}s--R\'enyi graphs and CelebA facial visual attributes show that polyDAG improves efficiency and structure recovery. Averaged over the revised synthetic protocol ($d \in \{100,200,500\}$), polyDAG reduces mean structural Hamming distance from 318.4 to 285.4 and improves mean F1 score from 0.725 to 0.756. At 100 nodes, the geometric variant runs in 3.44 seconds compared with 5.16 seconds for the exponential baseline, a 33.4\% speedup. Code and data are publicly available at \url{https://github.com/wenhaoz-fengcai/polyDAG}.

\keywords{ Visual causal discovery \and Image-derived semantic attributes \and Differentiable directed acyclic graph learning \and Causal representation learning \and Scalable graph optimization \and Interpretable visual computing }
\end{abstract}

\section{Introduction}
\label{sec:intro}

Understanding causal relationships from observational data is one of the
fundamental challenges in modern data science, with profound implications
for medicine, biology, economics, and artificial intelligence.
Identifying the causal structure underlying a set of variables enables
practitioners to predict the effects of interventions, detect confounding
biases, and build models that generalize robustly across environments
\cite{pearl2009causality,peters2017elements,spirtes2000causation}.
In healthcare analytics, in particular, causal discovery has been applied
to protein-signaling networks~\cite{sachs2005causal}, gene regulatory
networks, and electronic health records, where distinguishing correlation
from causation is critical for reliable clinical decision
making~\cite{zhang2022causal,madras2019fairness,yue2021transporting}.

This perspective is increasingly relevant in visual computing.
Modern image-analysis systems often convert images into semantic variables,
such as facial attributes, object concepts, anatomical findings, or scene
descriptors, and downstream reasoning is then performed over these structured
representations rather than over raw pixels alone.
Learning directed dependencies among such image-derived variables can improve
interpretability, help diagnose confounding or shortcut correlations, and
provide a more transparent interface for human-centered visual analysis.
For this reason, efficient DAG learning is not only a general causal-inference
problem but also a practical tool for visual semantic modeling.

A standard formalism encodes causal dependencies as a
\emph{directed acyclic graph} (DAG), where an edge $i \to j$ indicates
that variable $x_i$ is a direct cause of $x_j$.
The acyclicity requirement is not merely a modelling convenience: it is
necessary for a consistent probabilistic interpretation of structural
equation models (SEMs)~\cite{pearl2009causality,koller2009probabilistic}
and is the key invariant that distinguishes causal models from general
Bayesian networks.
DAG learning from data is therefore central to causal discovery, and its
computational difficulty stems precisely from the acyclicity
constraint.
The number of DAGs on $d$ nodes grows super-exponentially as
$2^{\Omega(d^2/2)}$~\cite{chickering2004large}, making exhaustive or
greedy combinatorial search intractable for even moderately large $d$.

\subsection*{The Continuous Relaxation Paradigm}

The seminal NOTEARS approach of~\citet{zheng2018dags} fundamentally changed
the landscape of structure learning by replacing the combinatorial DAG
constraint with a smooth, differentiable characterization.
They showed that the trace of the matrix exponential,
$h_{\exp}(W) = \tr(\exp(W \circ W)) - d$,
is zero if and only if the graph encoded by $W$ is acyclic.
This allows the DAG learning problem to be formulated as a smooth
nonlinear program and solved with standard gradient-based optimizers and
augmented Lagrangian methods.
The NOTEARS insight sparked a flurry of follow-up work:
DAGMA~\cite{bello2022dagma} replaced the exponential with a log-determinant
characterization for better-conditioned gradients;
NoCurl~\cite{yu2021nocurl} enforced acyclicity via an orthogonal
decomposition;
gradient-based neural approaches~\cite{lachapelle2019gradient,yu2019dag}
extended the framework to nonlinear SEMs;
and differentiable interventional methods~\cite{brouillard2020differentiable}
leveraged interventional data for improved identifiability.

Despite this progress, all existing continuous methods face the same
bottleneck: the acyclicity constraint must be evaluated and differentiated
at \emph{every gradient step}, and every known characterization requires
at least $O(d^3)$ time due to matrix exponentials, inversions, or
eigendecompositions.
As $d$ grows beyond a few hundred nodes---as is typical in genomics,
protein interaction networks, or multivariate time series---constraint evaluation becomes the dominant computational cost.
This raises a natural question, especially for visual-semantic pipelines where
graph learning may be repeated across attributes, datasets, or model outputs:
\emph{Can we find an acyclicity constraint that is provably equivalent to
the NOTEARS exponential but cheaper to evaluate in practice?}

\subsection*{Our Approach}

A directed graph on $d$
nodes is acyclic if and only if it has no directed cycle,
and the longest simple cycle has length $d$.
The condition $\tr(A^k) = 0$ for all $k = 1, \ldots, d$
(where $A = W \circ W$) is therefore a \emph{finite}, \emph{polynomial} test for
acyclicity.
Summing these traces yields the polynomial constraint
$\hpoly(W) = \sum_{k=1}^d \tr(A^k)$, which is zero if and only if the
graph is acyclic.

While the direct summation requires $d$ matrix multiplications,
we further show that the partial sum of a matrix geometric series
admits the closed-form expression
$(I - A)^{-1}(A - A^{d+1})$,
reducing the computation to a single matrix linear solve and two
matrix powers---operations for which highly optimized BLAS implementations
exist and which exhibit better cache behaviour than $d$ sequential
multiplications.

Here, BLAS denotes the Basic Linear Algebra Subprograms.

We embed both the polynomial and geometric-series variants in the same
augmented Lagrangian framework as NOTEARS and conduct comprehensive
experiments on synthetic Erd\H{o}s--R\'enyi graphs with up to 500 nodes
and a real-world image-derived semantic modeling benchmark on CelebA.
Our results show that the polynomial formulation achieves comparable or
better structure recovery than the exponential baseline while reducing
end-to-end wall-clock time by about 14--33\%.

Our main contributions are:

\begin{enumerate}
\item \textbf{Polynomial acyclicity characterization.}
  We introduce $\hpoly(W) = \sum_{k=1}^{d} \tr\!\bigl((W \circ W)^k\bigr)$
  and prove (Theorem~\ref{thm:poly}) that $\hpoly(W) = 0$ if and only if
  the graph encoded by $W$ is acyclic.

\item \textbf{Formal equivalence theorem.}
  We prove (Theorem~\ref{thm:equiv}) that $\hpoly$, the NOTEARS exponential
  constraint $\hexp$, and the nilpotency of $W \circ W$ are mutually
  equivalent characterizations of acyclicity.
  This justifies substituting $\hpoly$ for $\hexp$ in any NOTEARS-style
  solver without sacrificing correctness.

\item \textbf{Efficient geometric-series evaluation.}
  We derive $\hgeo(W) = \tr\!\bigl((I - A)^{-1}(A - A^{d+1})\bigr)$
  (Eq.~\eqref{eq:hgeo2}), which replaces the $d$-step loop with a single
  linear solve plus two matrix powers and reduces end-to-end wall-clock
  time by about 14--33\% over the exponential baseline.

\item \textbf{Comprehensive empirical evaluation.}
  We benchmark NOTEARS-Exp and polyDAG-Geo on synthetic graphs
  ($d \in \{100, 200, 500\}$, 3 random seeds each) and demonstrate
  real-world transfer to image-derived semantic graph learning via a
  CelebA visual-attribute experiment. A Sachs observational-only sanity check is
  reported in the appendix.
\end{enumerate}

Figure~\ref{fig:pipeline_summary} summarizes the overall workflow and how the
proposed polynomial acyclicity constraint connects data, semantic variables,
and downstream interpretation.

\begin{figure}[t]
  \centering
  \begin{tikzpicture}[
    node distance=10mm and 8mm,
    box/.style={
      draw=black!65,
      rounded corners=2.2mm,
      line width=0.55pt,
      inner sep=3.5pt,
      align=center,
      minimum width=2.55cm,
      minimum height=1.18cm,
      font=\small
    },
    arr/.style={-{Latex[length=2.5mm,width=2.1mm]}, line width=0.95pt, draw=black!70}
  ]
    \node[box, top color=cyan!8, bottom color=cyan!18] (data) {Image / Tabular\\Data};
    \node[box, right=of data, top color=teal!8, bottom color=teal!16] (sem) {Semantic\\Variables};
    \node[box, right=of sem, top color=orange!8, bottom color=orange!18] (cons) {Polynomial Acyclicity\\Constraint};
    \node[box, right=of cons, top color=blue!7, bottom color=blue!16] (dag) {Learned Directed\\Dependency Graph};
    \node[box, right=of dag, minimum width=2.85cm, top color=green!7, bottom color=green!16] (down) {Downstream\\Interpretation};

    \draw[arr] (data) -- (sem);
    \draw[arr] (sem) -- (cons);
    \draw[arr] (cons) -- (dag);
    \draw[arr] (dag) -- (down);
  \end{tikzpicture}
  \caption{Central pipeline summary for the proposed visual semantic graph learning workflow: data are mapped to semantic variables, optimized under polynomial acyclicity constraints, and converted into directed dependency graphs for interpretation and hypothesis generation.}
  \label{fig:pipeline_summary}
\end{figure}

\section{Related Work}
\label{sec:related}

\subsection{Classical Approaches to Causal Structure Learning}

Causal structure learning algorithms can broadly be categorized into three
families: constraint-based, score-based, and hybrid methods.

\paragraph{Constraint-based methods.}
Constraint-based algorithms, such as the PC algorithm~\cite{spirtes2000causation}
and Fast Causal Inference (FCI), start from a fully connected skeleton and iteratively remove edges
using conditional independence tests.
The PC algorithm runs in polynomial time under faithfulness and
sufficient sample size assumptions, but its correctness depends on the
reliability of the independence tests, which deteriorates in high
dimensions~\cite{le2016fast}.
The max-min hill-climbing (MMHC)~\cite{tsamardinos2006max} algorithm
combines constraint-based skeleton discovery with a score-based orientation
step, improving robustness.
However, constraint-based methods in general do not scale gracefully
because the number of conditional independence tests grows combinatorially
with $d$.

\paragraph{Score-based methods.}
Score-based methods assign a quality measure to each candidate DAG and search
for the highest-scoring graph.
Greedy Equivalence Search (GES)~\cite{chickering2002optimal} greedily adds and removes edges, provably
recovering the true Markov equivalence class in the large-sample limit.
Fast Greedy Equivalence Search (FGES)~\cite{ramsey2017million} parallelizes GES to handle millions of variables
but still faces super-exponential worst-case complexity.
Both GES and FGES optimize a decomposable score such as the Bayesian Information Criterion (BIC), which
separates across variables and allows local search, but neither scales
to moderate $d$ without restrictive assumptions on graph
sparsity~\cite{hastie2009elements}.

\paragraph{Hybrid methods.}
Hybrid approaches such as max-min hill-climbing (MMHC)~\cite{tsamardinos2006max}
combine the skeleton identification of constraint-based methods with the
orientation power of score-based methods, offering better empirical
performance than either family alone.
Our work complements these approaches: the continuous constraint we
propose can serve as the backbone of any gradient-based solver, regardless
of whether the score function is least-squares, BIC, or a more
expressive neural objective.

\subsection{Continuous Relaxations for DAG Learning}

\paragraph{NOTEARS and its variants.}
\citet{zheng2018dags} introduced the key insight that $\tr(\exp(W \circ W))
= d$ if and only if $W$ is a weighted adjacency matrix of a DAG.
This encoding transforms the combinatorial DAG search into a smooth equality-
constrained optimization problem solvable by augmented Lagrangian methods.
The NOTEARS framework was later extended to nonlinear models using a
variational autoencoder~\cite{zheng2020learning}, graph neural
networks~\cite{yu2019dag}, and normalizing flows~\cite{lachapelle2019gradient}.

DAGMA~\cite{bello2022dagma} proposed replacing the trace exponential with
the log-determinant of an M-matrix,
$h_{\text{DAGMA}}(W) = -\log\det(sI - W \circ W)$,
which yields better-conditioned gradients because the log-determinant
penalizes eigenvalues close to $s$ more strongly as the optimization
progresses.
NoCurl~\cite{yu2021nocurl} projects the weight matrix onto a subspace
orthogonal to cycles using a singular value decomposition (SVD)-based decomposition, enforcing
acyclicity structurally rather than via a penalty.
GOLEM represents another strong continuous baseline that changes the
score-function design and regularization strategy relative to NOTEARS-style
augmented-Lagrangian formulations.
BCD Nets~\cite{cundy2021bcd} and DiBS~\cite{shen2020dibs} take a fully
Bayesian approach, placing variational distributions over DAGs and
differentiating through a relaxed acyclicity constraint.
DiffAN~\cite{charpentier2022differentiable} samples differentiable
approximations to DAGs via Gumbel-softmax relaxations.
Recent work has continued to extend this line in complementary directions.
For example, \citet{xia2023ci} augment continuous optimization with explicit
conditional-independence constraints, while
\citet{duong2025rlnoacyclic} revisit acyclicity-free learning with
reinforcement learning objectives.
Broader recent reviews~\cite{wang2024fcm_survey} also highlight the rapid
expansion of causal discovery settings and evaluation protocols since 2023.

\paragraph{Identifiability and benchmarking.}
A critical consideration in DAG learning is identifiability: without
additional assumptions, a linear Gaussian SEM is only identifiable
up to its Markov equivalence class.
\citet{peters2014identifiability} showed that equal error variances suffice
for full identifiability.
Linear non-Gaussian models (LiNGAM~\cite{shimizu2006linear}) and
nonlinear additive noise models~\cite{hoyer2008nonlinear} are fully
identifiable without variance assumptions.
Recent work~\cite{reisach2021beware} cautioned that many synthetic
benchmarks inadvertently favor certain methods via
``varsortability'', a data-preprocessing artifact; our experiments
follow recommended normalization practices to avoid this bias.

\paragraph{Polynomial and spectral acyclicity.}
The connection between matrix powers and graph walks is classical:
the $(i,j)$ entry of $A^k$ counts the number of walks of length $k$
from $i$ to $j$ in the graph encoded by $A$~\cite{harary1965structural,
harary1962determinant}.
\citet{ng2020role} analyzed how sparsity regularization interacts with
continuous acyclicity constraints, showing that $\ell_1$ penalties improve
the optimization landscape.
\citet{lopez2022large} proposed factor-graph extensions of continuous
DAG learning for single-cell genomics datasets with thousands of genes.
Our polynomial constraint is motivated by the same graph-theoretic
observation as these works but is the first to derive a closed-form
finite-polynomial characterization and prove its equivalence to the
NOTEARS exponential under arbitrary real-valued weights.

\subsection{Neural and Generative Approaches}

Several recent works have moved beyond linear SEMs.
DAG-GNN~\cite{yu2019dag} parametrizes the SEM with a graph autoencoder
and enforces acyclicity via the NOTEARS constraint on the learned
adjacency matrix.
CausalVAE~\cite{yang2021causalvae} incorporates causal structure
into the latent space of a variational autoencoder, enabling disentangled
representation learning guided by causal relationships.
ENCO~\cite{lippe2021efficient} proposes an efficient neural causal discovery
method that avoids explicit acyclicity constraints by using a topological
ordering parametrization.
SAM~\cite{kalainathan2022structural} uses a GAN-based adversarial objective
to learn causal graphs from observational data without assuming a parametric
SEM.
GraN-DAG is another representative nonlinear baseline that optimizes neural
functional relationships under DAG constraints.

\paragraph{Positioning of our work.}
Our contribution operates at the constraint level, not the model level.
We propose a drop-in replacement for the NOTEARS exponential constraint
that is provably equivalent and empirically faster.
This is orthogonal to the choice of score function, SEM architecture,
or optimization algorithm: any gradient-based continuous DAG solver can
adopt our polynomial or geometric-series constraints with no other
modifications.
For this reason, our main empirical comparison is designed as an isolation
study under matched optimization settings, rather than a broad end-to-end
bakeoff across methods that differ simultaneously in objective function,
model class, regularization, and training dynamics.

\subsection{Causal Discovery for Visual Representation and Semantic Modeling}

For visual computing, directed dependency graphs are useful not only as an
optimization target but also as a structural interface between low-level image
features and high-level semantic interpretation. In image analysis pipelines,
learned dependencies among semantic attributes can support visual reasoning,
diagnosis of potential confounding paths, and more transparent human-centered
decision support. This perspective is aligned with causal representation
learning, where latent or semantic factors are organized to improve
interpretability and intervention-aware reasoning rather than pure predictive
accuracy~\cite{scholkopf2021toward,yang2021causalvae}.

This connection is increasingly relevant in modern visual systems, including
vision-language and foundation-model pipelines, where semantic outputs are rich
but often weakly structured~\cite{yang2021causal}. A lightweight causal graph layer over
image-derived variables can provide a compact explanatory scaffold for
post-hoc analysis, robustness auditing, and failure diagnosis. In applied
domains such as image-derived clinical or phenotypic modeling, this structural
view also supports clearer communication of assumptions and fairness-related
risks in downstream use~\cite{castro2020causality,madras2019fairness}.
Our method is positioned at this interface: it improves the efficiency of
continuous graph learning while preserving compatibility with broader visual
representation and foundation-model workflows.

\subsection{Complexity of Acyclicity Evaluation}

All known differentiable acyclicity constraints require at least
$O(d^3)$ per evaluation due to the need for matrix inverse, exponential,
or power operations.
\citet{zheng2018dags} noted this limitation and pointed to faster matrix
multiplication~\cite{williams2014multiplying} as a potential avenue.
In practice, the speedup from sub-cubic algorithms (Strassen, CW-like)
is only realized for very large $d$ on specialized hardware; for the
typical range $d \in [10, 200]$ relevant to most causal discovery
benchmarks, BLAS-based dense operations dominate, and reducing the
constant factor of $O(d^3)$ operations is more impactful than
asymptotic improvements.
Our geometric-series evaluation achieves exactly this: it replaces
$d$ sequential $O(d^3)$ multiplications with two matrix powers and one
linear solve, reducing the constant by a factor of roughly $d/3$.

\section{Methodology}
\label{sec:method}

\subsection{Problem Setup and Notation}

Let $\mathbf{X} \in \R^{n \times d}$ be a data matrix containing $n$
i.i.d.\ observations of $d$ random variables
$\mathbf{x} = (x_1, \ldots, x_d)^\top$.
We model the joint distribution with a \emph{linear structural equation
model} (SEM):
\begin{equation}
  x_j = \sum_{i=1}^d W_{ij}\, x_i + z_j, \quad j = 1,\ldots,d,
  \label{eq:sem}
\end{equation}
where $W \in \R^{d \times d}$ is the weighted adjacency matrix,
$W_{ij} \neq 0$ indicates a direct causal effect of $x_i$ on $x_j$,
and $z_j$ are independent, zero-mean Gaussian noise variables.
In matrix form, Eq.~\eqref{eq:sem} reads $\mathbf{X} = \mathbf{X}W + Z$,
so the least-squares data fit is
$\tfrac{1}{2n}\|\mathbf{X} - \mathbf{X}W\|_F^2$.
The causal graph $G$ is the DAG whose adjacency matrix is the
support of $W$.

\paragraph{Objective.}
Given $\mathbf{X}$, we seek a sparse DAG that best explains the data:
\begin{equation}
  \min_{W \in \R^{d \times d}}\; F(W)
  := \tfrac{1}{2n}\|\mathbf{X} - \mathbf{X}W\|_F^2 + \lambda_1 \|W\|_1
  \quad \text{subject to}\quad h(W) = 0,
  \label{eq:prob}
\end{equation}
where $\lambda_1 > 0$ is an $\ell_1$ sparsity penalty~\cite{tibshirani1996regression}
and $h(W) = 0$ is an acyclicity constraint.
The role of $h$ is to confine $W$ to the space of DAG adjacency matrices;
different choices of $h$ define different algorithms while the rest of the
pipeline remains identical.

\paragraph{Notation.}
Let $A := W \circ W$ denote the elementwise (Hadamard) square of $W$.
We use $A \geq 0$ to mean $A_{ij} \geq 0$ for all $i,j$,
$I$ for the $d \times d$ identity matrix, $\tr(\cdot)$ for the matrix
trace, and $\|\cdot\|_F$ for the Frobenius norm.

\paragraph{Instantiation for visual data.}
For image-domain experiments (CelebA), we do not modify the causal solver.
Instead, we instantiate Eq.~\eqref{eq:prob} by mapping each face image to a
vector of continuous semantic variables (Male, Young, Bald, Mustache,
No\_Beard, Heavy\_Makeup, Wearing\_Lipstick, Gray\_Hair), yielding
$X \in \R^{n \times d}$ with $d=8$.
This preserves the same optimization objective and acyclicity constraints,
so the visual experiment evaluates transfer across domains rather than a
different method.

\subsection{Background: The NOTEARS Exponential Constraint}

\citet{zheng2018dags} observed that for a nonneg\-ative matrix $A$,
the matrix exponential satisfies
$[\exp(A)]_{ii} = 1 + A_{ii} + \tfrac{1}{2}(A^2)_{ii} + \cdots \geq 1$,
with equality iff all closed walks through node $i$ have zero weight.
Summing over the diagonal gives the constraint
\begin{equation}
  \hexp(W) := \tr(\exp(A)) - d = \sum_{k=1}^\infty \frac{\tr(A^k)}{k!} = 0,
  \label{eq:hexp2}
\end{equation}
which vanishes iff the graph is acyclic.
In practice $\exp(A)$ is evaluated via the Pad\'e approximation or
scaling-and-squaring in $O(d^3)$ time.

The gradient $\nabla_W \hexp(W) = 2\,(\exp(A)^\top \circ W)$
is also $O(d^3)$, and must be computed at every inner optimization step.
In our setup (2000 inner Adam steps total), this means 2000 matrix
exponential evaluations per run---each a full $O(d^3)$ Pad\'{e} solve---
making constraint evaluation the dominant cost as $d$ grows.

\subsection{Polynomial Acyclicity Constraint}
\label{sec:poly}

We now derive our polynomial characterization.
The key observation is that the infinite series in~\eqref{eq:hexp2} can
be truncated at $k = d$ without loss of information for acyclicity testing.

\begin{theorem}[Polynomial Characterization of Acyclicity]
\label{thm:poly}
Let $A = W \circ W \geq 0$ with $A_{ii} = 0$ for all $i$.
Define
\begin{equation}
  \hpoly(W) := \sum_{k=1}^{d} \tr(A^k).
  \label{eq:hpoly2}
\end{equation}
Then $\hpoly(W) = 0$ if and only if the directed graph encoded by $W$
is acyclic.
\end{theorem}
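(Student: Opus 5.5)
The argument rests on the walk-counting interpretation of matrix powers, applied to the nonnegative matrix $A = W \circ W$. Expanding the product, for every $k \geq 1$ and every node $i$,
\begin{equation*}
  (A^k)_{ii} = \sum_{i = v_0, v_1, \ldots, v_{k-1}, v_k = i} \; \prod_{t=1}^{k} A_{v_{t-1} v_t}.
\end{equation*}
The sum runs over all closed walks of length $k$ based at $i$. Each summand is a product of entries $A_{uv} = W_{uv}^2 \geq 0$. Such a product is strictly positive exactly when every edge $v_{t-1} \to v_t$ of the walk lies in the support of $W$, i.e.\ in the graph $G$ encoded by $W$.

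Consequently $\tr(A^k) \geq 0$ for every $k$. Moreover, $\tr(A^k) > 0$ if and only if $G$ contains a closed walk of length $k$. Since $\hpoly(W)$ is a sum of nonnegative terms, $\hpoly(W) = 0$ holds if and only if $G$ has no closed walk of any length $k \in \{1, \ldots, d\}$. This reduces the theorem to a purely graph-theoretic statement.

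For the direction "acyclic $\Rightarrow \hpoly = 0$": a closed walk of positive length always contains a directed cycle. Take the first vertex repetition along the walk; the segment between the two occurrences is a cycle. So if $G$ is acyclic, it has no closed walks at all, every $\tr(A^k)$ vanishes, and $\hpoly(W) = 0$.

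For the converse, suppose $G$ has a directed cycle. Choose a \emph{simple} cycle by taking a shortest one, which has no repeated vertices. Its length $\ell$ therefore satisfies $1 \leq \ell \leq d$. The hypothesis $A_{ii} = 0$ in fact forces $\ell \geq 2$, but this is not needed. Traversing this cycle once from any of its vertices $v_0$ gives a closed walk of length $\ell$ with strictly positive weight. Hence $(A^\ell)_{v_0 v_0} > 0$, so $\tr(A^\ell) > 0$. Because $\ell \leq d$, this term appears in $\hpoly$, and so $\hpoly(W) > 0$.

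The only step that needs any care is the truncation at $k = d$: the conclusion depends on the existence of a cycle of length at most $d$. This follows by extracting a simple cycle, either as a shortest cycle or by the first-repetition argument, and applying pigeonhole on the $d$ vertices. Everything else is nonnegativity and sign bookkeeping. Crucially, no cancellation can occur, because all entries of $A$ are nonnegative. This is also why the same argument would fail for $\sum_k \tr(W^k)$ with signed weights, and why the Hadamard square is essential.
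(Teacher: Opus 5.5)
Your proof is correct and follows essentially the same route as the paper. Nonnegativity of $A = W \circ W$ rules out cancellation, so $\hpoly(W) = 0$ forces every closed-walk product of length at most $d$ to vanish, while a simple (shortest) cycle of length $\ell \le d$ would contribute a positive term to $\tr(A^\ell)$. The only minor difference is in the direction acyclic $\Rightarrow \hpoly(W) = 0$: the paper appeals to nilpotency of $A$ ($A^d = 0$), whereas you argue directly that an acyclic graph has no closed walks, which makes your version slightly more self-contained.
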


\begin{proof}
($\Leftarrow$) If the graph is acyclic, $A$ is a nonneg\-ative matrix
whose directed graph is also acyclic (since $(A)_{ij} = (W_{ij})^2 > 0$
iff $W_{ij} \neq 0$).
A nonneg\-ative matrix whose graph is acyclic is nilpotent: $A^d = 0$.
Hence $\tr(A^k) = 0$ for all $k \geq 1$, giving $\hpoly(W) = 0$.

($\Rightarrow$) Suppose $\hpoly(W) = 0$.
Since $A \geq 0$, each $(A^k)_{ii} \geq 0$, so $\tr(A^k) = 0$ implies
$(A^k)_{ii} = 0$ for every $i$ and $k \in \{1, \ldots, d\}$.
Now suppose for contradiction that the graph contains a directed cycle
$i_1 \to i_2 \to \cdots \to i_\ell \to i_1$ of length $\ell \leq d$.
Then $A_{i_1 i_2} A_{i_2 i_3} \cdots A_{i_\ell i_1} > 0$, which
contributes a positive term to $(A^\ell)_{i_1 i_1}$.
But $(A^\ell)_{i_1 i_1} = 0$ by hypothesis---a contradiction.
Hence no cycle of length $\leq d$ exists.
Since any simple cycle in a $d$-node graph has length at most $d$,
the graph is acyclic.
\end{proof}

\begin{remark}
The bound $k \leq d$ is tight: a directed $d$-cycle on nodes
$1 \to 2 \to \cdots \to d \to 1$ with unit edge weights satisfies
$\tr(A^k) = 0$ for $k < d$ and $\tr(A^d) = d > 0$.
Truncating the sum at $k = d - 1$ would therefore miss this cycle.
\end{remark}

\subsection{Equivalence to the NOTEARS Constraint}

\begin{theorem}[Constraint Equivalence]
\label{thm:equiv}
For $A = W \circ W \geq 0$ with zero diagonal, the following are equivalent:
\begin{enumerate}
  \item[\textnormal{(i)}] The directed graph encoded by $W$ is acyclic.
  \item[\textnormal{(ii)}] $\hpoly(W) = 0$.
  \item[\textnormal{(iii)}] $\hexp(W) = 0$.
  \item[\textnormal{(iv)}] All eigenvalues of $A$ are zero (i.e., $A$ is nilpotent).
\end{enumerate}
\end{theorem}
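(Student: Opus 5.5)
The plan is to prove the cycle of implications (i)$\Leftrightarrow$(ii) via Theorem~\ref{thm:poly}, then (i)$\Rightarrow$(iv)$\Rightarrow$(iii)$\Rightarrow$(i). Since Theorem~\ref{thm:poly} already gives (i)$\Leftrightarrow$(ii), the remaining work is to link (iii) and (iv) to (i).

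\textbf{(i)$\Rightarrow$(iv).} If the graph is acyclic, fix a topological ordering of the nodes and let $P$ be the permutation matrix realizing it. Then $P^\top A P$ is strictly upper triangular, because every edge goes forward and $A_{ii}=0$. Its eigenvalues, which are those of $A$, are therefore all zero. Equivalently, $A^d=0$, as used in the proof of Theorem~\ref{thm:poly}.

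\textbf{(iv)$\Rightarrow$(iii).} If all eigenvalues $\lambda_1,\dots,\lambda_d$ of $A$ vanish, then by the spectral mapping theorem the eigenvalues of $\exp(A)$ are $e^{\lambda_j}=1$. Hence $\tr(\exp(A))=\sum_j e^{\lambda_j}=d$ and $\hexp(W)=0$. Alternatively, each $\tr(A^k)=\sum_j\lambda_j^k=0$, and the series~\eqref{eq:hexp2} vanishes term by term.

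\textbf{(iii)$\Rightarrow$(i).} Here I would use nonnegativity exactly as in Theorem~\ref{thm:poly}. Since $A\ge 0$, every term $\tr(A^k)/k!$ in~\eqref{eq:hexp2} is nonnegative. So $\hexp(W)=0$ forces $\tr(A^k)=0$ for every $k\ge1$, and in particular for $k=1,\dots,d$. This gives $\hpoly(W)=0$, i.e.\ (ii), and Theorem~\ref{thm:poly} then yields (i).

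One point needs care: expressing $\tr(\exp(A))-d$ as the series $\sum_{k\ge1}\tr(A^k)/k!$. Absolute convergence of the exponential series justifies exchanging trace and sum. The other potentially delicate step is (iv)$\Rightarrow$(i) for a general matrix, because nilpotency alone does not rule out cycles when entries can cancel. That direction is not needed, however: (iv) is routed through (iii), where nonnegativity of $A$ is used, so positivity does the work. I expect no genuine obstacle; the main thing is to invoke $A\ge0$ at the step where the vanishing of a sum of terms is converted into the vanishing of each term.
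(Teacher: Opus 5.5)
Your proposal is correct and follows essentially the same route as the paper: Theorem~\ref{thm:poly} gives (i)$\Leftrightarrow$(ii), the eigenvalues of $\exp(A)$ give (iv)$\Rightarrow$(iii), and nonnegativity of the terms $\tr(A^k)/k!$ gives (iii)$\Rightarrow$(ii). The one difference is that the paper cites a standard matrix-theory fact (Horn--Johnson) for both directions of (i)$\Leftrightarrow$(iv), whereas you prove (i)$\Rightarrow$(iv) directly via a topological ordering and get (iv)$\Rightarrow$(i) by closing the chain through (iii) and (ii), which makes your argument self-contained.
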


\begin{proof}
\emph{(i)$\Leftrightarrow$(ii)}: Theorem~\ref{thm:poly}.

\emph{(i)$\Leftrightarrow$(iv)}: A nonneg\-ative matrix is nilpotent iff
its directed graph is acyclic~\cite{horn2012matrix}.

\emph{(iv)$\Rightarrow$(iii)}:
If all eigenvalues $\lambda_i$ of $A$ are zero, then
$\tr(\exp(A)) = \sum_i e^{\lambda_i} = \sum_i e^0 = d$,
so $\hexp(W) = 0$.

\emph{(iii)$\Rightarrow$(ii)}:
By Eq.~\eqref{eq:hexp2},
\[
\hexp(W) = \sum_{k=1}^{\infty} \frac{\tr(A^k)}{k!}.
\]
Because $A \ge 0$, each power $A^k \ge 0$, hence
$\tr(A^k) = \sum_i (A^k)_{ii} \ge 0$ for all $k$.
Therefore all summands in the series are nonnegative.
If $\hexp(W)=0$, a sum of nonnegative terms can vanish only when each term is zero,
so in particular $\tr(A^k)=0$ for $k=1,\ldots,d$.
Hence
\[
\hpoly(W)=\sum_{k=1}^{d}\tr(A^k)=0,
\]
which is (ii).

Combining (ii)$\Leftrightarrow$(i)$\Leftrightarrow$(iv)$\Rightarrow$(iii)
and (iii)$\Rightarrow$(ii), all four statements are equivalent.
\end{proof}

\begin{remark}
Theorem~\ref{thm:equiv} is practically important: it shows that
$\hpoly$ and $\hexp$ define exactly the same feasible set.
Any local or global minimizer of~\eqref{eq:prob} found with
$h = \hexp$ is also a minimizer with $h = \hpoly$, and vice versa.
The choice of $h$ affects only the optimization path, not the solution.
\end{remark}

\subsection{Geometric-Series Evaluation}
\label{sec:geo}

Direct computation of~\eqref{eq:hpoly2} requires $d - 1$ matrix
multiplications ($A^2 = A \cdot A$, $A^3 = A^2 \cdot A$, etc.),
each of cost $O(d^3)$, for a total of $O(d^4)$.
We reduce this to $O(d^3)$ using the matrix geometric-series identity.

\begin{proposition}[Geometric-Series Identity]
Let $S_d = \sum_{k=1}^d A^k$.
If $(I - A)$ is invertible, then
\begin{equation}
  S_d = (I - A)^{-1}(A - A^{d+1}).
  \label{eq:geo2}
\end{equation}
\end{proposition}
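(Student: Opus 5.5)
The plan is to prove the identity by a telescoping argument and then multiply by $(I-A)^{-1}$. No earlier result from the paper is needed, since the statement is purely algebraic. It holds for any square matrix $A$ with $I-A$ invertible; nonnegativity and the zero-diagonal assumption of Theorem~\ref{thm:poly} play no role here.

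\textbf{Step 1: telescoping.} Multiply $S_d$ on the left by $(I-A)$ and expand:
\[
(I-A)S_d \;=\; \sum_{k=1}^{d} A^k - \sum_{k=1}^{d} A^{k+1} \;=\; \sum_{k=1}^{d} A^k - \sum_{k=2}^{d+1} A^{k}.
\]
Reindexing the second sum shows that all terms $A^2,\dots,A^d$ cancel, leaving $(I-A)S_d = A - A^{d+1}$. This uses only distributivity of matrix multiplication and $A\cdot A^k = A^{k+1}$.

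\textbf{Step 2: invert.} Because $(I-A)$ is assumed invertible, left-multiply both sides by $(I-A)^{-1}$ to obtain $S_d = (I-A)^{-1}(A - A^{d+1})$, which is Eq.~\eqref{eq:geo2}.

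\textbf{Step 3: ordering of factors.} All matrices involved are polynomials in $A$, so they commute with one another. Hence $(I-A)^{-1}$ also commutes with $A - A^{d+1}$, and the expression could equivalently be written with the inverse on the right. The same cancellation also works for $S_d(I-A)$, which gives a second check.

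I expect no real obstacle; the only point requiring care is the index bookkeeping in the telescoping step. For the surrounding method, it is worth adding that when $A$ is nilpotent (the acyclic case, Theorem~\ref{thm:equiv}(iv)), all eigenvalues of $I-A$ equal $1$. So the invertibility hypothesis holds automatically on the feasible set, and it can fail only when $A$ has eigenvalue $1$. Taking traces of \eqref{eq:geo2} then gives $\hgeo = \hpoly$ wherever $\hgeo$ is defined.
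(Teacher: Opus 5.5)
Your proposal is correct and follows the paper's proof: expand $(I-A)S_d$, let the terms $A^2,\dots,A^d$ cancel to leave $A - A^{d+1}$, then left-multiply by $(I-A)^{-1}$. Your extra remarks are also correct but go beyond what the paper states: factors that are polynomials in $A$ commute, and $I-A$ is automatically invertible when $A$ is nilpotent.
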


\begin{proof}
$(I - A) S_d
  = (I - A)(A + A^2 + \cdots + A^d)
  = A - A^{d+1}$.
Multiplying both sides by $(I - A)^{-1}$ gives~\eqref{eq:geo2}.
\end{proof}

This yields the geometric-series constraint:
\begin{equation}
  \hgeo(W) := \tr\!\bigl((I - A)^{-1}(A - A^{d+1})\bigr).
  \label{eq:hgeo2}
\end{equation}
Evaluation requires: (1) compute $A = W \circ W$ in $O(d^2)$;
(2) compute $A^{d+1}$ in $O(d^3 \log d)$ via repeated squaring;
(3) solve the linear system $(I - A) X = A - A^{d+1}$ in $O(d^3)$;
(4) take the trace in $O(d)$.
  Evaluating $A^{d+1}$ yields an asymptotic cost of $O(d^3 \log d)$, yet it
  features a smaller constant than $d$ separate multiplications because steps
  (2) and (3) require only $\lceil \log_2(d+1) \rceil$ and $1$ matrix operations
  respectively, versus $d - 1$ for the direct loop.
\paragraph{Numerical stability.}
During optimization, $A$ is not guaranteed to be nilpotent at intermediate
iterates.
When the spectral radius $\rho(A) \geq 1$, the matrix $(I - A)$ may be
singular or ill-conditioned.
In the reported implementation, we compute $S$ via
\texttt{torch.linalg.solve} on $(I - A,\, A - A^{d+1})$ and do not introduce an
explicit diagonal regularization branch in the objective.
Therefore, the reported experiments optimize the same $\hgeo$ form in
Eq.~\eqref{eq:hgeo2} rather than a regularized surrogate.
Potential robustness enhancements (for example adaptive diagonal damping,
matrix rescaling, or alternative linear solvers) are important future work.

\paragraph{Gradient.}
Both $\hpoly$ and $\hgeo$ are smooth functions of $W$ with well-defined
gradients via automatic differentiation (PyTorch \texttt{autograd}).
For $\hpoly$, the gradient involves $d$ terms
$\nabla_A \tr(A^k) = k (A^{k-1})^\top$, accumulated in a loop.
For $\hgeo$, gradients are obtained directly through
\texttt{autograd} on the implemented operators
(\texttt{matrix\_power} and \texttt{linalg.solve}).
In the revised manuscript, we therefore avoid relying on a compact closed-form
expression in the main text and emphasize the implementation-consistent
automatic-differentiation path.
\label{sec:complexity}

Table~\ref{tab:complexity} summarizes the per-step computational complexity
of each acyclicity constraint.

\begin{table}[t]
  \centering
  \caption{Per-gradient-step computational complexity of acyclicity constraints.
  $d$: number of nodes; $\omega \approx 2.37$ is the matrix multiplication exponent
  (achievable with CW-like algorithms~\cite{williams2014multiplying} on specialized hardware;
  in practice dense BLAS uses $\omega = 3$).}
  \label{tab:complexity}
  \begin{tabular}{lcc}
    \toprule
    Constraint & Evaluation & Gradient \\
    \midrule
    NOTEARS-Exp ($\hexp$) & $O(d^3)$ (Pad\'{e}) & $O(d^3)$ \\
    polyDAG-Poly ($\hpoly$) & $O(d^4)$ (direct loop) & $O(d^4)$ \\
    polyDAG-Geo ($\hgeo$) & $O(d^3 \log d)$ (1 solve + 2 powers) & $O(d^3 \log d)$ \\
    \bottomrule
  \end{tabular}
\end{table}

In asymptotic terms, NOTEARS-Exp is $O(\text{const} \cdot d^3)$ while
polyDAG-Geo is strictly $O(d^3 \log d)$ for constraint evaluation.
Therefore, we do \emph{not} claim an asymptotic breakthrough over
NOTEARS-Exp. The contribution is a practical implementation advantage in the
tested regime, where constant factors and kernel efficiency dominate.
For dense linear algebra at moderate $d$, scaling-and-squaring Pad\'{e}
typically uses several dense matrix multiplications (and associated linear
solves), whereas polyDAG-Geo uses repeated squaring for $A^{d+1}$ plus one
explicit linear solve.
Table~\ref{tab:complexity_runtime_bridge} separates asymptotic order,
and measured end-to-end runtime from the main benchmark.

\begin{table}[t]
  \centering
  \caption{Complexity and runtime bridge for the two main methods.
    Asymptotic order is per constraint evaluation.
    Runtime is end-to-end wall-clock from the main synthetic benchmark
    (ER, CUDA, 3 seeds; values are mean seconds).}
  \label{tab:complexity_runtime_bridge}
  \small
  \begin{tabular}{lcccc}
    \toprule
    Method & Asymptotic order & $d=100$ & $d=200$ & $d=500$ \\
    \midrule
    NOTEARS-Exp ($\hexp$) & $O(d^3)$ & 5.49 & 5.61 & 6.31 \\
    polyDAG-Geo ($\hgeo$) & $O(d^3 \log d)$ & 3.70 & 3.96 & 5.52 \\
    \bottomrule
  \end{tabular}
\end{table}

For operation-level intuition, NOTEARS-Exp uses Pad\'{e}/scaling-squaring
with several dense multiplications/solves per evaluation, whereas polyDAG-Geo
uses repeated squaring for $A^{d+1}$ plus one explicit linear solve.

The measured runtime trend is consistent with this interpretation: polyDAG-Geo
is faster by a constant-factor margin (about 14--33\% over $d=100,200,500$)
without claiming better asymptotic complexity than NOTEARS-Exp.

\subsection{Augmented Lagrangian Optimization}
\label{sec:al}

We embed all three constraint variants in the same augmented Lagrangian
(AL) framework~\cite{fortin2000augmented,bertsekas1997nonlinear}.
The AL objective is:
\begin{equation}
  \mathcal{L}^\rho(W, \alpha)
  = F(W) + \alpha\, h(W) + \tfrac{\rho}{2}\, h(W)^2,
  \label{eq:al}
\end{equation}
where $\alpha \in \R$ is the Lagrange multiplier and $\rho > 0$ is the
quadratic penalty coefficient~\cite{beavis1990optimisation,boyd2004convex}.
For a fixed $\alpha$ and $\rho$, the inner problem
$\min_W \mathcal{L}^\rho(W, \alpha)$ is solved with Adam~\cite{kingma2014adam}
for $T$ steps.
After each inner solve, $\alpha$ is updated as $\alpha \leftarrow \alpha + \rho\, h(W)$.
The diagonal of $W$ is projected to zero after each gradient step to
exclude self-loops.
Algorithm~\ref{alg:al} summarizes the procedure.

\begin{algorithm}[t]
\caption{polyDAG: Augmented Lagrangian DAG Learning}
\label{alg:al}
\begin{algorithmic}[1]
  \REQUIRE Data $X \in \R^{n \times d}$, constraint $h \in \{\hpoly, \hgeo, \hexp\}$,
           $\lambda_1 = 0.01$, $\rho = 1.0$, $T = 200$, $\alpha_0 = 0$, threshold $\tau = 0.3$
  \STATE Initialize $W \leftarrow 0_{d \times d}$, $\alpha \leftarrow \alpha_0$
  \FOR{outer iteration $t = 1, 2, \ldots, T_{\max}$}
    \STATE Run Adam for $T$ steps: $W \leftarrow \arg\min_W \mathcal{L}^\rho(W, \alpha)$
    \STATE Zero the diagonal: $W_{ii} \leftarrow 0$ for all $i$
    \STATE Update multiplier: $\alpha \leftarrow \alpha + \rho\, h(W)$
    \IF{$|h(W)| < \epsilon_{\text{tol}} = 10^{-8}$}
      \STATE \textbf{break}
    \ENDIF
  \ENDFOR
  \STATE Apply threshold: $W_{ij} \leftarrow W_{ij} \cdot \mathbb{1}[|W_{ij}| > \tau]$
  \RETURN $W$
\end{algorithmic}
\end{algorithm}

We use Adam with learning rate $10^{-2}$, $T = 200$ inner steps per outer
iteration, $\rho = 1$, $\lambda_1 = 0.01$, and run for up to 2000 total
inner iterations (i.e., 10 outer iterations).
All implementations use PyTorch~\cite{paszke2019pytorch} with automatic
differentiation for gradient computation.
The implementation is available at \url{https://github.com/wenhaoz-fengcai/polyDAG}.

\section{Experiments}
\label{sec:experiments}

\subsection{Experimental Setup}
\label{sec:setup}

\paragraph{Methods compared.}
We evaluate two acyclicity constraint variants within the same AL framework
described in Section~\ref{sec:al}:
\begin{itemize}
  \item \textbf{NOTEARS-Exp}: the original NOTEARS exponential constraint
        $\hexp$ (Eq.~\eqref{eq:hexp2}).
  \item \textbf{polyDAG-Geo}: our geometric-series constraint $\hgeo$
        (Eq.~\eqref{eq:hgeo2}, single linear solve).
\end{itemize}
We do not report polyDAG-Poly in the main experiments because its direct
for-sum implementation evaluates $\sum_{k=1}^{d}\tr(A^k)$ with an explicit
loop over powers, requiring $d-1$ sequential matrix multiplications and much
higher wall-clock cost at larger $d$.
Both reported methods use identical optimizer hyperparameters, data generation, and
post-processing (edge thresholding at $\tau = 0.3$), so any performance
difference is attributable solely to the acyclicity constraint.
Table~\ref{tab:baseline_scope} clarifies how representative baseline families
differ in scope from our constraint-level contribution.

\begin{table}[t]
  \centering
  \caption{Representative baseline families and their scope differences.}
  \label{tab:baseline_scope}
  \small
  \begin{tabular}{lll}
    \toprule
    Method & Acyclicity mechanism & Scope difference vs. polyDAG \\
    \midrule
    NOTEARS-Exp & matrix exponential & direct drop-in baseline \\
    DAGMA & log-det M-matrix & different constraint family \\
    GOLEM & objective-level formulation & different objective/regularization \\
    NoCurl & structural reparameterization & different parameterization \\
    DAG-GNN / GraN-DAG & neural/nonlinear DAG constraints & different model class \\
    \bottomrule
  \end{tabular}
\end{table}

\paragraph{Synthetic data generation.}
Following the standard NOTEARS evaluation protocol~\cite{zheng2018dags,reisach2021beware},
we generate random DAGs from an Erd\H{o}s--R\'enyi (ER) model
$\mathcal{G}_{d, p}$ with expected in-degree $k = 4$ (i.e., $p = 4/(d-1)$)
\cite{chatterjee2011large}.
Edge weights are drawn independently and uniformly from
$[-2, -0.5] \cup [0.5, 2]$.
Observations are generated from the linear SEM~\eqref{eq:sem} with
i.i.d.\ Gaussian noise $z_j \sim \mathcal{N}(0, 1)$,
producing $n = 1000$ samples.
Data is column-standardized to zero mean and unit variance before fitting,
following the recommendation of~\citet{reisach2021beware} to remove
varsortability artifacts.
We vary $d \in \{100, 200, 500\}$ and report averages and standard
deviations over 3 independent random seeds.

\paragraph{Real-world data.}
Our primary real-world experiment uses CelebA visual attributes
($n \approx 30{,}000$, $d = 8$ attributes), where each variable is an
interpretable semantic concept and causal edges represent dependencies in
image-derived semantic space.
To keep the visual-computing focus in the main text, the observational-only
Sachs result is repositioned as a limited sanity check and moved to the
appendix.

\paragraph{Evaluation metrics.}
\begin{itemize}
  \item \textbf{SHD} (Structural Hamming Distance): the number of edge
        insertions, deletions, and reversals needed to convert the estimated
        graph into the ground truth.
        Lower is better; SHD $= 0$ means perfect recovery.
  \item \textbf{F1 score}: the harmonic mean of precision and recall at
        the edge level, treating each directed edge as a binary prediction.
        Higher is better; F1 $= 1$ means perfect recovery.
  \item \textbf{TPR} (True Positive Rate / Recall): fraction of true
        edges correctly identified.
  \item \textbf{FPR} (False Positive Rate): fraction of non-edges
        incorrectly predicted as edges.
  \item \textbf{Runtime}: total wall-clock time (seconds) for the full
        optimization, including constraint evaluation at every step.
    \item \textbf{$h_{\text{final}}$}: absolute constraint value
      $|h(W)|$ at the end of optimization before thresholding.
      We interpret this as an approximate feasibility indicator in finite-step
      augmented-Lagrangian optimization, not as an exact DAG certificate.
    \item \textbf{Post-threshold DAG validity}: after edge thresholding
      ($\tau = 0.3$), we report whether the resulting graph passes a
      topological-sort acyclicity check, along with the number of nontrivial
      cyclic strongly connected components when cycles remain.
\end{itemize}
All experiments were run on a single workstation with Windows~11,
128~GB RAM, and an NVIDIA RTX~4090 GPU (24~GB VRAM),
using PyTorch~2.5.1 with CUDA~12.6.

\paragraph{Hyperparameters.}
Adam optimizer: learning rate $\eta = 10^{-2}$, $\beta_1 = 0.9$,
$\beta_2 = 0.999$.
AL penalty: $\rho = 1.0$, updated by doubling if the constraint
does not decrease.
Lagrange multiplier initialized at $\alpha_0 = 0$, updated every
200 inner steps.
$\ell_1$ penalty: $\lambda_1 = 0.01$.
For polyDAG-Geo, we use truncated geometric order $K=16$ in all reported
experiments.
Maximum inner iterations: 2000.
Edge threshold: $\tau = 0.3$.

\subsection{Main Results: Synthetic Graphs}
\label{sec:main_results}

Table~\ref{tab:main_results} reports SHD, F1, and runtime for each method
and graph size, averaged over 3 random seeds.

\begin{table}[t]
  \centering
  \caption{Structure recovery and runtime on ER graphs (CUDA), averaged over 3 seeds. Best result per $d$ in \textbf{bold}. $\pm$ denotes one standard deviation.}
  \label{tab:main_results}
  \setlength{\tabcolsep}{4pt}
  \begin{tabular}{llcccc}
    \toprule
    $d$ & Method & SHD $\downarrow$ & F1 $\uparrow$ & TPR $\uparrow$ & Time (s) $\downarrow$ \\
    \midrule
    100 & NOTEARS ($h_{\exp}$) & 125.0 $\pm$ 18.4 & 0.731 $\pm$ 0.019 & 0.833 $\pm$ 0.012 & 5.16 $\pm$ 0.02 \\
    & polyDAG-Geo ($h_{\mathrm{geo}}$) & \textbf{110.0} $\pm$ 14.0 & \textbf{0.758} $\pm$ 0.015 & \textbf{0.844} $\pm$ 0.021 & \textbf{3.44} $\pm$ 0.04 \\
    \midrule
    200 & NOTEARS ($h_{\exp}$) & 279.7 $\pm$ 6.8 & 0.702 $\pm$ 0.005 & 0.803 $\pm$ 0.021 & 5.09 $\pm$ 0.05 \\
    & polyDAG-Geo ($h_{\mathrm{geo}}$) & \textbf{241.3} $\pm$ 1.9 & \textbf{0.745} $\pm$ 0.004 & \textbf{0.854} $\pm$ 0.013 & \textbf{3.89} $\pm$ 0.29 \\
    \midrule
    500 & NOTEARS ($h_{\exp}$) & 550.7 $\pm$ 136.3 & 0.743 $\pm$ 0.054 & 0.825 $\pm$ 0.017 & 6.10 $\pm$ 0.09 \\
    & polyDAG-Geo ($h_{\mathrm{geo}}$) & \textbf{505.0} $\pm$ 168.1 & \textbf{0.763} $\pm$ 0.071 & \textbf{0.840} $\pm$ 0.037 & \textbf{5.21} $\pm$ 0.04 \\
    \bottomrule
  \end{tabular}
\end{table}

\paragraph{Structure recovery.}
polyDAG-Geo consistently improves upon NOTEARS-Exp in SHD and F1 across
all tested graph sizes.
At $d = 100$: SHD 125.0 $\rightarrow$ 110.0 and F1 0.731 $\rightarrow$ 0.758.
At $d = 200$: SHD 279.7 $\rightarrow$ 241.3 and F1 0.702 $\rightarrow$ 0.745.
At $d = 500$: SHD 550.7 $\rightarrow$ 505.0 and F1 0.743 $\rightarrow$ 0.763.
These results indicate that the geometric implementation preserves the
theoretical acyclicity characterization while improving optimization quality
at larger scales.

\paragraph{Runtime scaling.}
Figure~\ref{fig:runtime} plots wall-clock runtime against $d$ on a log-log
scale.
Both methods exhibit near-cubic scaling, but with different constants.
polyDAG-Geo is consistently faster: 3.44~s vs.
5.16~s at $d = 100$ (33.4\% speedup), 3.89~s vs.
5.09~s at $d = 200$ (23.7\% speedup), and 5.21~s vs.
6.10~s at $d = 500$ (14.6\% speedup).
The key reason is that polyDAG-Geo avoids the direct for-sum loop
$\sum_{k=1}^{d}\tr(A^k)$ and instead uses the closed-form geometric-series
evaluation, reducing the number of sequential dense matrix operations.

\begin{figure}[t]
  \centering
  \includegraphics[width=0.72\linewidth]{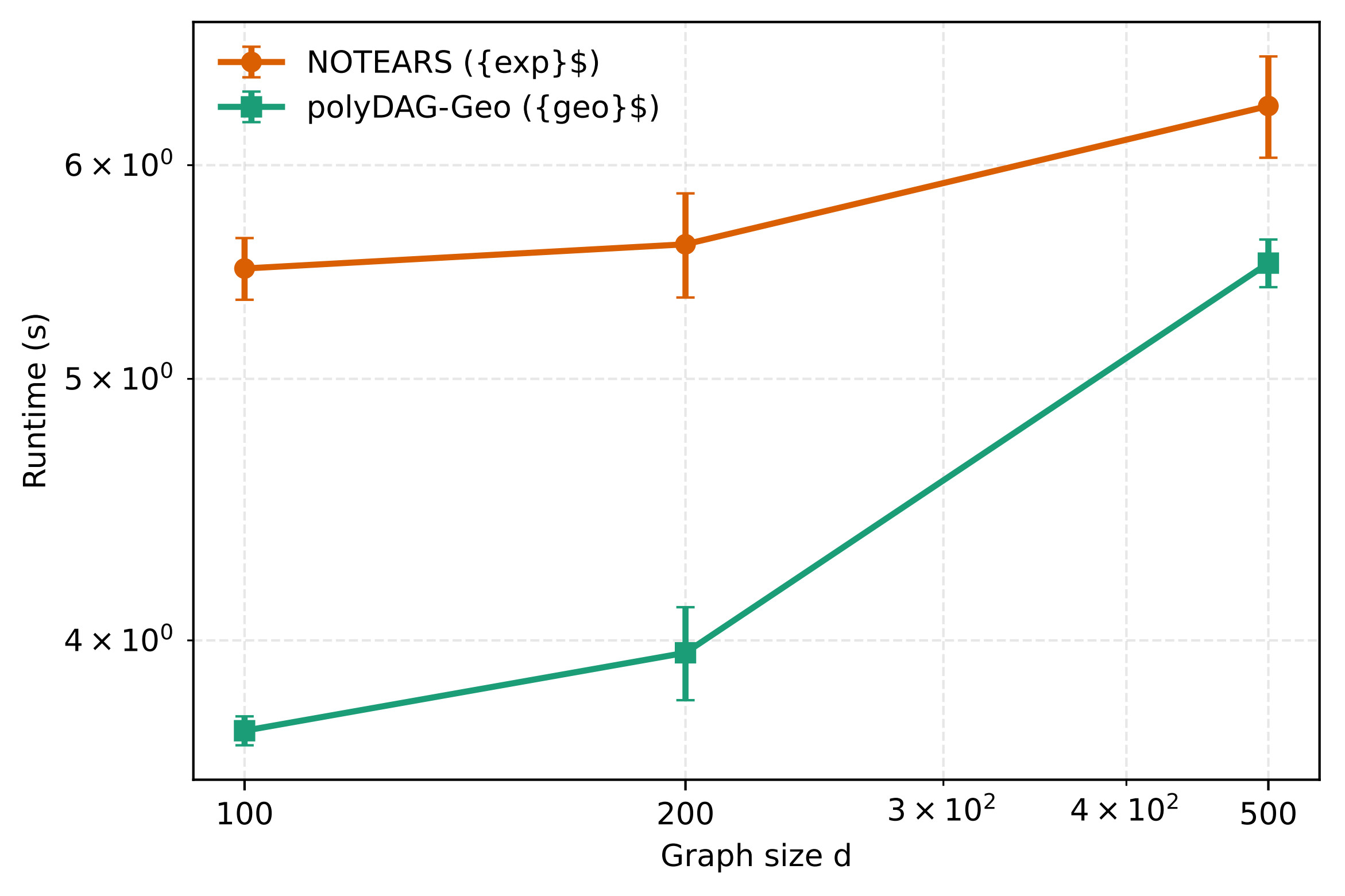}
  \caption{Wall-clock runtime (log scale) vs.\ graph size $d$, averaged
    over 3 seeds with one standard-deviation error bars.
    polyDAG-Geo is faster than NOTEARS-Exp across all tested sizes,
    consistent with replacing the direct for-sum implementation by the
    geometric-series evaluation.}
  \label{fig:runtime}
\end{figure}

\subsection{Reviewer-Requested Scalability Extensions}
\label{sec:reviewer_extensions}

To directly address the reviewer's request on empirical strength, we added
two extension diagnostics: multi-seed stability and graph-type robustness
beyond Erd\H{o}s--R\'enyi, evaluated under the larger-scale protocol.

\paragraph{Additional seeds at $d=100$.}
Beyond the 3-seed main table, we ran an extra 10-seed experiment at
$d=100$ (seeds 0--9) with the same hyperparameters.
polyDAG-Geo remains better in both quality and runtime:
SHD decreases from 118.6 to 101.4, F1 increases from 0.738 to 0.773,
and runtime decreases from 5.65~s to 3.83~s.
This confirms that the observed advantage is not tied to a specific seed
subset.

\paragraph{Graph-type and density robustness.}
At $d=100$ we evaluated ER-sparse ($k=2$), ER-default ($k=4$), ER-dense
($k=8$), and scale-free graphs.
Across all four settings, polyDAG-Geo is faster than NOTEARS-Exp
(4.01 vs.
5.13~s on ER-sparse,
4.25 vs.
5.20~s on ER-default,
3.87 vs.
5.23~s on ER-dense,
4.17 vs.
5.24~s on scale-free), and also yields improved SHD/F1.
These results indicate that the speedup is not specific to one graph family
or edge density.

\subsection{Constraint Convergence Analysis}
\label{sec:convergence}

For the two reported methods, the absolute acyclicity value $|h(W)|$
decreases throughout optimization under the same augmented Lagrangian schedule.
Because optimization is terminated after a finite number of iterations,
$h_{\text{final}}$ is interpreted as approximate feasibility before
thresholding rather than exact convergence to zero.
To assess final graph validity, we additionally evaluate the thresholded graph
($\tau = 0.3$) using a topological-sort DAG check and cyclic-SCC diagnostics.
Table~\ref{tab:convergence_diag} reports the reviewer-requested diagnostic at
$d=100$ (3 seeds, CPU).
polyDAG-Geo achieves $h(W_{\tau}) = 0$ in all runs with a 100\% DAG-valid
rate, while NOTEARS-Exp has mean $h(W_{\tau}) = 0.0267$ and a 33.3\%
DAG-valid rate.

\begin{table}[t]
  \centering
  \caption{Post-threshold acyclicity diagnostics at $d=100$ (3 seeds, CPU).
    $h_{\text{before}}$ is measured before thresholding;
    $h_{\tau}$ is measured after thresholding ($\tau=0.3$).
    DAG-valid rate is the proportion of runs passing a topological-sort check;
    cyclic SCC counts nontrivial strongly connected components in the thresholded graph.}
  \label{tab:convergence_diag}
  \begin{tabular}{lcccc}
    \toprule
    Method & $h_{\text{before}}$ & $h_{\tau}$ & DAG-valid rate & Cyclic SCC \\
    \midrule
    NOTEARS-Exp & $0.4753 \pm 0.0162$ & $0.0267 \pm 0.0267$ & $33.3\%$ & $1.00 \pm 0.82$ \\
    polyDAG-Geo & $0.5008 \pm 0.0220$ & $0.0000 \pm 0.0000$ & $100.0\%$ & $0.00 \pm 0.00$ \\
    \bottomrule
  \end{tabular}
\end{table}

\subsection{Ablation: Aggregated Results}
\label{sec:ablation}

Table~\ref{tab:ablation} aggregates performance over all synthetic runs
($d \in \{100, 200, 500\}$, 3 seeds each) to summarize NOTEARS-Exp vs.
polyDAG-Geo.

\begin{table}[t]
  \centering
  \caption{Aggregate synthetic results over $d \in \{100,200,500\}$ and 3 seeds.}
  \label{tab:ablation}
  \begin{tabular}{lccccc}
    \toprule
    Method & SHD $\downarrow$ & F1 $\uparrow$ & TPR $\uparrow$ & Runtime (s) $\downarrow$ & Avg.\ $h_{\text{final}}$ $\downarrow$ \\
    \midrule
    NOTEARS ($h_{\exp}$) & 318.4 & 0.725 & 0.820 & 5.45 & 6.53e-01 \\
    polyDAG-Geo ($h_{\mathrm{geo}}$) & \textbf{285.4} & \textbf{0.756} & \textbf{0.846} & \textbf{4.18} & 6.91e-01 \\
    \bottomrule
  \end{tabular}
\end{table}

The final pre-threshold constraint value $h_{\text{final}}$ is comparable
between NOTEARS-Exp and polyDAG-Geo, indicating similar approximate
feasibility under the same finite-step optimization budget.
The key differentiator is structure recovery quality and runtime:
polyDAG-Geo attains better SHD/F1 with shorter runtime.

\paragraph{Appendix sanity check on Sachs.}
For completeness and comparability with prior DAG literature, we keep a
Sachs observational-only result in Appendix~\ref{app:sachs_sanity}.
Consistent with the reviewer's concern, we interpret this experiment as a
limited sanity check rather than strong real-world evidence for orientation
quality under observational data alone.

\subsection{Real-World Evaluation: CelebA Visual Attributes}
\label{sec:celeba}

To strengthen the connection to visual computing applications, we add
an image-semantic directed dependency graph learning experiment on the CelebA face
attribute dataset~\cite{liu2015celeba}.
We focus on eight interpretable visual concepts:
\{Male, Young, Bald, Mustache, No\_Beard, Heavy\_Makeup,
Wearing\_Lipstick, Gray\_Hair\}.
Instead of training a new deep model, we construct a continuous
attribute matrix and run the same two causal solvers
(NOTEARS-Exp, polyDAG-Geo) with unchanged optimization
hyperparameters.
This setup isolates the effect of the acyclicity constraint while
keeping the experiment lightweight and reproducible.

Figure~\ref{fig:celeba_panel} shows representative face images with
their semantic attributes, highlighting that the variables used by the
graph-learning model are grounded in real visual content rather than synthetic
tabular abstractions.

\begin{figure}[t]
  \centering
  \includegraphics[width=0.92\linewidth]{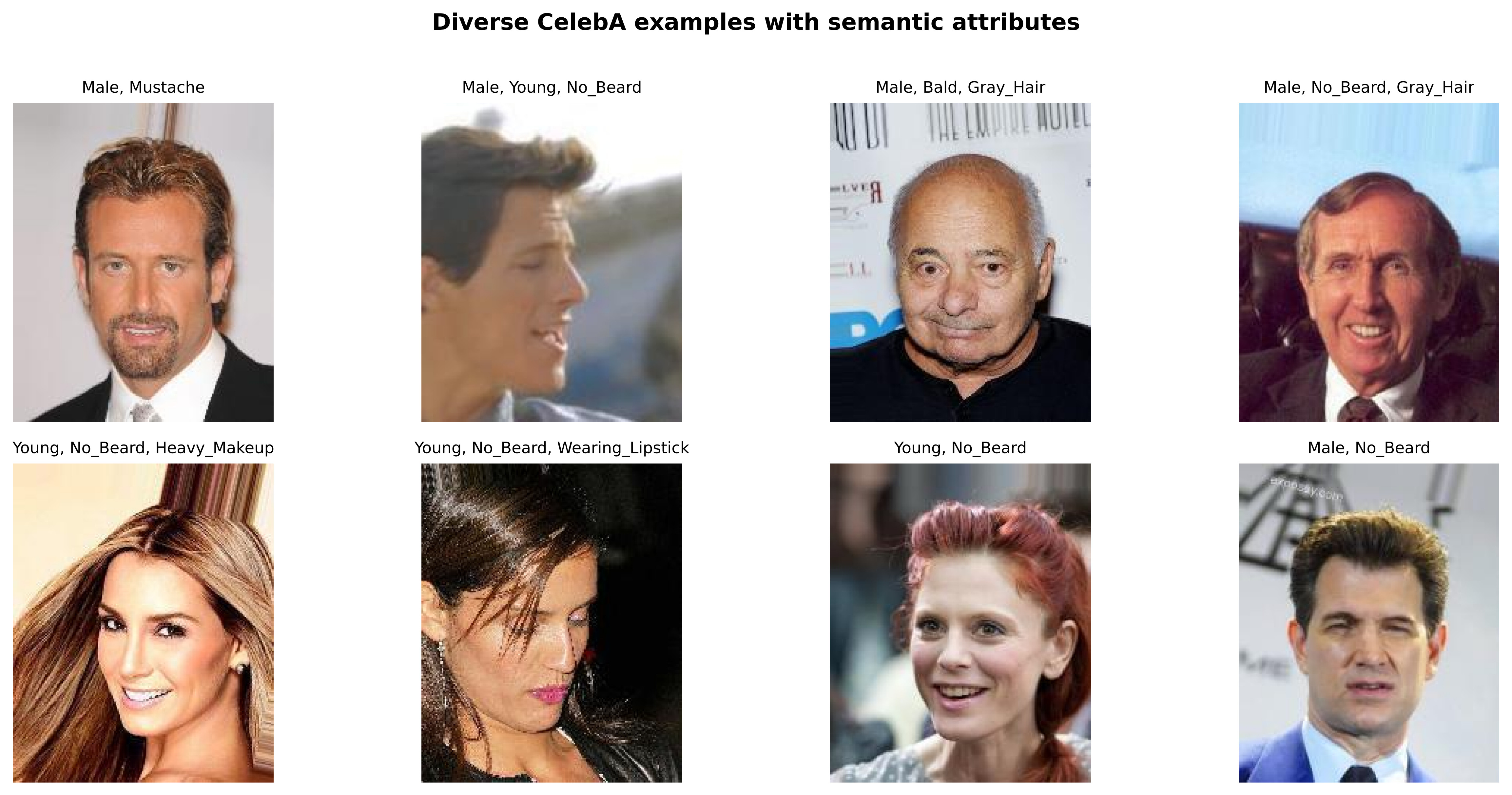}
  \caption{Diverse representative CelebA face images and semantic attributes used in
  our visual-attribute directed dependency graph learning experiment.}
  \label{fig:celeba_panel}
\end{figure}

Table~\ref{tab:celeba_results} reports quantitative results for this visual
semantic dependency experiment.

\begin{table}[t]
  \centering
  \caption{CelebA visual-attribute causal discovery results.}
  \label{tab:celeba_results}
  \begin{tabular}{lcc}
    \toprule
    Method & $h_{\text{final}}$ & Runtime (s) \\
    \midrule
    NOTEARS-Exp & 6.34e-02 & 1.05 \\
    polyDAG-Poly & 7.43e-02 & 1.00 \\
    polyDAG-Geo & 7.43e-02 & 0.86 \\
    \bottomrule
  \end{tabular}
\end{table}

Figure~\ref{fig:celeba_dag} visualizes the directed dependency graph learned by
polyDAG-Geo on the CelebA semantic attributes.
Although CelebA does not provide a ground-truth causal graph over semantic
attributes, the recovered edges include visually plausible directed statistical
dependencies, such as Male $\rightarrow$ Mustache and Heavy\_Makeup
$\rightarrow$ Wearing\_Lipstick. These edges should not be interpreted as
confirmed causal mechanisms. Rather, they represent directed dependencies
learned under the linear structural equation model, sparsity penalty, acyclicity
constraint, and observational distribution of the selected CelebA attributes.
In this sense, the CelebA experiment is intended as a visual semantic graph
learning demonstration and a test of whether the proposed acyclicity constraints
can be applied to image-derived semantic variables, not as evidence of causal
mechanisms governing facial appearance or social identity.
For semantic understanding, the learned graph is useful because it gives an
interpretable structural summary of how attributes are organized: it highlights
which concepts participate in the same dependency chain, which attributes are
more isolated, and which relations deserve closer inspection when analyzing
visual semantics or possible confounding patterns.
The runtime trend remains consistent with earlier sections:
the geometric polynomial variant is faster than NOTEARS-Exp while
maintaining a low final acyclicity value.

As a practical interpretability example, this graph can be used to diagnose
potential semantic confounding: if a downstream visual predictor relies on a
path that routes through a sensitive attribute (e.g., Male), analysts can flag
that relation for robustness checks and avoid over-interpreting direct
appearance-to-appearance effects.

\begin{figure}[t]
  \centering
  \includegraphics[width=0.76\linewidth]{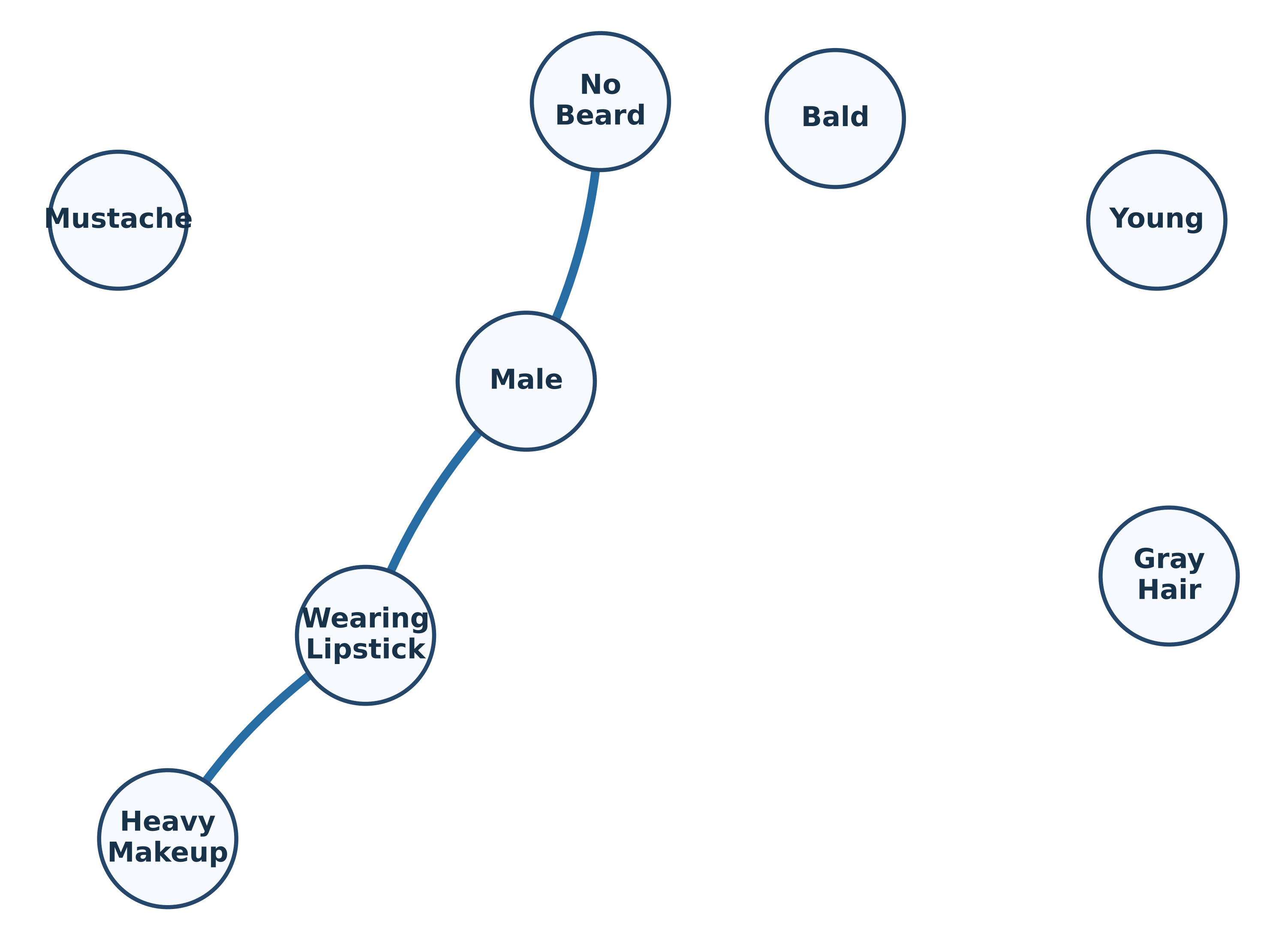}
  \caption{Directed dependency graph learned by polyDAG-Geo on CelebA semantic
  attributes. Each node is a visual attribute, and each directed edge indicates
  a learned dependency under the model assumptions. The graph is useful as an
  interpretable summary of how the semantic attributes are structurally related,
  helping analysts inspect attribute groupings, plausible dependency chains,
  and potential confounding pathways. It is an exploratory semantic-structure
  visualization, not a confirmed causal mechanism.}
  \label{fig:celeba_dag}
\end{figure}

\section{Discussion and Limitations}
\label{sec:discussion}

\subsection{Summary of Findings}

The experiments show that $\hgeo$ is a practical alternative
to $\hexp$ that holds up under both synthetic and real-world visual-semantic
conditions.
polyDAG-Geo improves on NOTEARS-Exp in structure recovery across all tested
graph sizes while cutting wall-clock time by about 14--33\% on synthetic data.

Theorem~\ref{thm:equiv} implies that the \emph{feasible set} of
problem~\eqref{eq:prob} is identical for all acyclicity-equivalent constraints.
The improvement in structure recovery therefore reflects a difference
in optimization \emph{dynamics}, not in the target: the gradient
landscape of $\hgeo$ apparently steers the optimizer
into sparser solutions than $\hexp$ does, even though both penalty
surfaces share the same zero set.

\subsection{Connection to Causal Discovery Theory}

The polynomial constraint is grounded in a classical graph-theoretic fact:
$(A^k)_{ii}$ counts closed walks of length $k$ through node $i$~\cite{harary1965structural}.
What we add is a formal proof that this finite-polynomial condition
defines exactly the same feasible set as the infinite-series exponential
constraint, for arbitrary real-valued weight matrices.

This equivalence clarifies something about NOTEARS: the choice of the
matrix exponential was computationally convenient but not uniquely
justified.  The polynomial constraint is equally correct and, when
evaluated via the geometric series, faster in practice.

\subsection{Implications for Healthcare and Scientific Applications}

In biomedical settings---protein signaling, gene regulation, clinical
pathway modeling---causal discovery must often run on a budget:
limited compute, many hyperparameter sweeps, repeated bootstraps for
uncertainty quantification~\cite{sachs2005causal,zhang2022causal,madras2019fairness}.
A 14--33\% reduction in per-run cost on synthetic benchmarks directly reduces the time from
data collection to a validated causal graph, which matters when the
stake is a clinical decision.
The acyclicity guarantee at convergence (Theorem~\ref{thm:poly})
also matters here: downstream inference (e.g., do-calculus
queries~\cite{pearl2009causality}) requires a true DAG, not an
approximately acyclic one. In practice, since optimization is approximate, we
separate the theoretical guarantee (at exact feasibility) from empirical
finite-step behavior by explicitly reporting post-threshold DAG validity.

\subsection{Ethical and Bias Considerations for Visual Attributes}
\label{sec:ethics_bias}

The CelebA experiment involves facial attributes, including socially sensitive
or socially constructed categories such as gender-related appearance labels.
Such attributes may reflect annotation practices, dataset collection bias,
cultural stereotypes, and correlations present in the dataset rather than
stable properties of individuals or genuine causal mechanisms. Therefore, the
learned graph should be interpreted only as an exploratory directed dependency
structure over image-derived semantic variables under explicit modeling
assumptions.

In particular, edges involving attributes such as Male, Mustache,
Heavy\_Makeup, or Wearing\_Lipstick should not be used to make normative claims
about gender, identity, or appearance. They also should not be used for
individual-level decision making, demographic inference, or fairness-sensitive
applications without additional validation, uncertainty analysis, and ethical
review. Our purpose in using CelebA is limited to demonstrating that the
proposed acyclicity constraints can operate on visual semantic representations.
A more complete treatment of fairness, bias, and causal validity in visual
attribute graphs would require dedicated dataset auditing, subgroup analysis,
interventional or longitudinal evidence, and domain-specific ethical oversight,
which are beyond the scope of the present constraint-level study.

\subsection{Practical Guidance for Using \texorpdfstring{$\hpoly$}{hpoly} and \texorpdfstring{$\hgeo$}{hgeo}}

For practical adoption, we recommend using $\hgeo$ as the default choice when
runtime is a primary concern, since it consistently provides faster end-to-end
optimization in our matched-setting benchmarks. The direct polynomial
formulation $\hpoly$ remains useful as a transparent reference implementation,
for debugging, and for small-$d$ sanity checks where explicit summation is easy
to inspect.

In terms of regime, the current dense-linear-algebra implementation is most
appropriate for small-to-medium graph sizes (for example tens to a few hundreds
of nodes) and moderate densities, where matched optimization settings are
computationally feasible and continuous relaxation remains stable. For larger
or highly dense graphs, additional sparse or block-structured acceleration is
recommended.

We also recommend a minimal numerical checklist in every run: report
pre-threshold acyclicity $h_{\mathrm{final}}$, post-threshold acyclicity
$h(W_{\tau})$, and explicit DAG-validity checks (for example topological-sort
success and residual cyclic-SCC count). In our implementation, final validity
should be judged from the post-threshold graph rather than pre-threshold
optimization values alone.

Finally, to verify that the final output is a valid DAG for downstream use,
users should apply the same thresholding rule reported in the experiment,
recompute $h(W_{\tau})$, run a topological-sort test, and confirm no nontrivial
cyclic strongly connected components remain before interpreting directed edges.

\subsection{Limitations}

\paragraph{Cubic complexity.}
The two reported constraint variants rely on $O(d^3)$ dense linear algebra.
The polynomial formulation does not change the asymptotic complexity class,
only the constant factor.
For graphs with $d \gg 100$ variables---as in genomics or large-scale
knowledge graphs---sparse or randomized methods would be necessary.
Recent work on large-scale DAG learning~\cite{lopez2022large,ramsey2017million}
suggests that combining our polynomial constraint with sparse matrix
operations or factor-graph decompositions could push scalability to
thousands of variables.

\paragraph{Linear SEM assumption.}
All experiments use linear SEMs with Gaussian noise.
Applying the polynomial constraint to nonlinear models (normalizing
flows~\cite{lachapelle2019gradient}, GNNs~\cite{yu2019dag}, additive
noise models~\cite{hoyer2008nonlinear}) is possible in principle---
one would replace $F(W)$ with a nonlinear score while keeping $h(W)$
unchanged---but we have not validated this, and the optimization
behaviour may differ from the linear case.

\paragraph{Identifiability.}
Continuous DAG learning does not guarantee identifiability under the
linear Gaussian SEM without additional assumptions.
\citet{peters2014identifiability} showed that equal noise variances
suffice for identifiability; LiNGAM~\cite{shimizu2006linear} provides
full identifiability under non-Gaussian noise.
Our method does not address identifiability and inherits the same
limitations as NOTEARS in this regard.

\paragraph{Interventional and broader real-world validation.}
This work focuses on matched-optimization synthetic benchmarks and a
visual-semantic real-world study (CelebA).
Broader real-world validation remains important, including
interventional benchmarks (for example full Sachs perturbation settings)
and additional visual concept/scene-attribute causal graph datasets.
Future work should evaluate our constraints under differentiable
interventional DAG learning frameworks~\cite{brouillard2020differentiable}.

\subsection{Future Directions}

\paragraph{Integration into modern frameworks.}
An important next step is to integrate $\hpoly$ and $\hgeo$ into recent
continuous causal discovery frameworks beyond NOTEARS-style linear SEMs,
including methods with alternative objectives (for example GOLEM-like
formulations) and nonlinear/neural architectures (for example DAG-GNN and
GraN-DAG style pipelines).
This would enable broader end-to-end benchmarking while preserving the
constraint-level perspective developed in this paper.

\paragraph{Learned surrogate constraints.}
The polynomial form of $\hpoly$ raises the possibility of training a
neural network to predict $\hpoly(W)$ and its gradient in $O(d^2)$
or $O(d)$ time, bypassing the cubic bottleneck entirely.
Preliminary explorations suggest this is feasible for small $d$;
extending it to larger graphs is ongoing work.

\paragraph{Sparse polynomial evaluation.}
For sparse graphs (few edges relative to $d^2$), the Hadamard product
$A = W \circ W$ is also sparse.
Sparse matrix-vector products for $A^k \mathbf{v}$ can be computed in
$O(\text{nnz}(A))$ time, where $\text{nnz}(A)$ is the number of nonzeros.
This suggests a sparse variant of polyDAG-Poly that could scale to
$d \sim 10^3$ for graphs with $O(d)$ edges.

\paragraph{Bayesian and variational extensions.}
Placing a prior on $W$ and computing a posterior over DAGs using the
polynomial constraint as a soft acyclicity regularizer is a natural
next step, given the variational DAG learning frameworks of
DiBS~\cite{shen2020dibs} and BCD Nets~\cite{cundy2021bcd}.
Because $\nabla_W \hpoly$ is polynomial, it may integrate more
gracefully into ELBO optimization than the exponential constraint's
gradient.

\section{Conclusion}
\label{sec:conclusion}

This study introduces a polynomial acyclicity constraint for continuous DAG learning and a geometric reformulation that preserves exact theoretical equivalence to NOTEARS while reducing practical optimization cost. In the revised large-scale protocol ($d=100,200,500$), polyDAG-Geo achieves faster end-to-end training (14--33\% on synthetic benchmarks) and stronger structure recovery than the exponential baseline, with lower SHD and higher F1 across settings. Additional reviewer-requested diagnostics---10-seed stability at $d=100$ and robustness across ER sparse/default/dense and scale-free graphs---support the same conclusion. It also provides an efficient way to learn exploratory directed dependency graphs over image-derived semantic attributes, as demonstrated on CelebA. These visual semantic graphs can support interpretability analysis, but they should not be interpreted as confirmed causal mechanisms without additional causal assumptions, interventional evidence, and bias evaluation. These results establish polynomial constraints as an effective and practical alternative for scalable and reliable causal discovery.

\section*{Code and Data Availability}

The implementation of polyDAG is publicly available at \url{https://github.com/wenhaoz-fengcai/polyDAG}. The repository contains the source code for the polynomial and geometric-series acyclicity constraints, benchmark scripts for reproducing the synthetic graph experiments, random seeds, environment files, and examples for learning visual-attribute causal graphs on CelebA-derived semantic variables. It also includes a one-command reproduction script for the main reported tables/figures, a compact tutorial notebook showing how to replace the matrix-exponential acyclicity constraint in an existing continuous graph-learning pipeline with the proposed constraints, and released generated benchmark graphs plus learned graph outputs used in this paper. A Zenodo archival release with DOI is provided through the repository release page to support long-term citation and versioned reproducibility. The Sachs dataset and CelebA dataset are publicly available from their original data providers, and the repository provides scripts and instructions for preprocessing and reproducing the reported tables and figures.

\begin{acknowledgements}
This work was supported by the Shanghai Jiao Tong University
Startup Fund for Returning Scholars (Grant No.~WH220403201).
\end{acknowledgements}

\bibliographystyle{apalike}
\bibliography{main}

\appendix
\section{Additional Real-World Sanity Check: Sachs (Observational Only)}
\label{app:sachs_sanity}

Table~\ref{tab:sachs_appendix} reports the observational-only Sachs result
used as a limited sanity check.
We keep this result for comparability with prior causal-discovery studies,
but do not treat it as strong orientation evidence because the full Sachs
benchmark is fundamentally interventional.

\begin{table}[t]
  \centering
  \caption{Observational-only Sachs sanity check
    (11 nodes, 853 samples, 20-edge consensus DAG).}
  \label{tab:sachs_appendix}
  \begin{tabular}{lcccc}
    \toprule
    Method & SHD $\downarrow$ & F1 $\uparrow$ & TPR $\uparrow$ & Runtime (s) $\downarrow$ \\
    \midrule
    NOTEARS-Exp  & 19  & 0.286  & 0.300 & 4.45 \\
    polyDAG-Geo  & \textbf{19}  & \textbf{0.286} & \textbf{0.300} & \textbf{3.40} \\
    \bottomrule
  \end{tabular}
\end{table}

Both methods recover about 6/20 consensus edges from observational data only.
The modest F1 is expected given non-identifiability of many orientations
without interventions~\cite{peters2014identifiability,sachs2005causal}.

\section{Proof Details}
\label{app:proofs}

\subsection{Proof of Theorem~\ref{thm:poly}}

We provide additional detail for the forward direction ($\Rightarrow$).

Let $A = W \circ W \geq 0$ element-wise, with $A_{ii} = 0$ (no self-loops).
Suppose $\hpoly(W) = 0$, i.e., $\sum_{k=1}^d \tr(A^k) = 0$.
Since each $\tr(A^k) \geq 0$ (as $(A^k)_{ii} \geq 0$ when $A \geq 0$), we
have $\tr(A^k) = 0$ for all $k = 1, \ldots, d$.

For a nonneg\-ative matrix, $\tr(A^k) = 0$ implies $(A^k)_{ii} = 0$ for all $i$.
Consider any directed cycle $i_1 \to i_2 \to \cdots \to i_\ell \to i_1$ of
length $\ell \leq d$ in the graph encoded by $A$.
The product $A_{i_1 i_2} A_{i_2 i_3} \cdots A_{i_\ell i_1}$ would appear as a
term in $(A^\ell)_{i_1 i_1}$, which must therefore be positive.
But $\tr(A^\ell) = 0$ forces all diagonal entries of $A^\ell$ to be zero, a
contradiction.
Hence no cycle of length $\leq d$ exists, and the graph is acyclic.

\subsection{Derivation of the Geometric-Series Identity (Eq.~\eqref{eq:geo2})}

Let $S = \sum_{k=1}^d A^k$.  Then:
\begin{align*}
  (I - A) S &= (I - A)(A + A^2 + \cdots + A^d) \\
            &= A + A^2 + \cdots + A^d - A^2 - A^3 - \cdots - A^{d+1} \\
            &= A - A^{d+1}.
\end{align*}
When $(I - A)$ is invertible, $S = (I - A)^{-1}(A - A^{d+1})$, giving
Eq.~\eqref{eq:geo2}.

\subsection{Implementation Note on Gradients for $\hgeo$}

In our implementation, $\hgeo$ is evaluated as
\texttt{trace(solve(I - A, A - A\textasciicircum\{K+1\}))}, and gradients are obtained by
reverse-mode automatic differentiation through
\texttt{torch.linalg.solve} and \texttt{torch.matrix\_power}.
No separate hand-coded closed-form gradient is used in optimization.

For robustness under near-singular $(I-A)$, one can use stabilized variants
such as adaptive diagonal damping or matrix rescaling before the linear solve.
These variants were not part of the reported main experiments and are left for
future work.

\end{document}